\theoremstyle{definition}
\theoremstyle{definition}
\theoremstyle{definition}
\theoremstyle{definition}
\theoremstyle{definition}
\theoremstyle{definition}
\theoremstyle{definition}
\theoremstyle{definition}
\definecolor{lightblue}{RGB}{173,216,230} 
\definecolor{lightgray}{gray}{.97}
\let\oldforall\forall
\renewcommand{\forall}{\oldforall \, }
\let\oldLeftrightarrow\Leftrightarrow
\renewcommand{\Leftrightarrow}{\oldLeftrightarrow \,\, }
\let\oldexist\exists
\renewcommand{\exists}{\oldexist \: }
\newcommand{\KL}{D_{\mathrm{KL}}}
\newcommand{\proj}{\mathcal{P}}
\newcommand{\E}{\mathbb{E}}
\newcommand{\cX}{\mathcal{X}}
\newcommand{\cD}{\mathcal{D}}
\newcommand{\R}{\mathbb{R}}
\newcommand{\Grad}{\nabla\!\!\!\!\nabla}
\newtheorem{assumption}{Assumption}
\newtheorem{proposition}{Proposition}
\newtheorem{theorem}{Theorem}
\newtheorem{remark}{Remark}
\definecolor{bubblefill}{HTML}{CCE5FF} 
\definecolor{bubbleframe}{HTML}{6FA8FF} 
\newtcolorbox{contribbox}[2][]{%
	colback=bubblefill,
	colframe=bubbleframe,
	coltitle=black,
	boxrule=0pt,
	arc=2mm,
	left=6pt,right=6pt,top=6pt,bottom=6pt,
	fonttitle=\bfseries,
	title={#2},
	#1
}
\definecolor{darkblue}{rgb}{0,0,0.8}
\definecolor{lightblue}{RGB}{173,216,230} 
\definecolor{lightgray}{gray}{.97}
\begin{document}

%

%

\twocolumn[

\aistatstitle{Particle Dynamics for Latent-Variable Energy-Based Models}

\aistatsauthor{%
	Shiqin Tang$^{a}$\textsuperscript{*,†} \And
	Shuxin Zhuang$^{a,b}$\textsuperscript{*} \And
	Rong Feng$^{a,c}$ \And
	Runsheng Yu$^{d}$ \And
	Hongzong Li$^{e}$ \And
	Youzhi Zhang$^{a}$ 
}

\aistatsaddress{%
	$^{a}$Centre for Artificial Intelligence and Robotics, Chinese Academy of Sciences, Hong Kong\\
	$^{b}$Department of Data Science, City University of Hong Kong, Hong Kong\\
	$^{c}$Department of Computer Science, City University of Hong Kong, Hong Kong\\
	$^{d}$Department of Computer Science, Hong Kong University of Science and Technology, Hong Kong\\
	$^{e}$Generative AI Research and Development Center, The Hong Kong University of Science and Technology, Hong Kong\\[0.6em]
	\textsuperscript{*}\textbf{Equal contribution.} \quad
	\textsuperscript{†}\textbf{Corresponding author:} \texttt{shiqin.tang@cair-cas.org.hk}\\[0.3em]
	Emails: \texttt{shiqin.tang@cair-cas.org.hk}, \texttt{shuxin.zhuang@my.cityu.edu.hk}, \\ \texttt{rongfeng3-c@my.cityu.edu.hk}, \texttt{runshengyu@gmail.com}, \texttt{lihongzong@ust.hk},  \texttt{youzhi.zhang.lgc@gmail.com}%
}

 ]

\begin{abstract}
Latent-variable energy-based models (LV-EBMs) assign a single normalized energy to joint pairs of observed data and latent variables, offering expressive generative modeling while capturing hidden structure. We recast maximum-likelihood training as a saddle problem over distributions on the latent and joint manifolds and view the inner updates as coupled Wasserstein gradient flows. The resulting algorithm alternates overdamped Langevin updates for a joint negative pool and for conditional latent particles with stochastic parameter ascent, requiring no discriminator or auxiliary networks. We prove existence and convergence under standard smoothness and dissipativity assumptions, with decay rates in KL divergence and Wasserstein-2 distance. The saddle-point view further yields an ELBO strictly tighter than bounds obtained with restricted amortized posteriors. Our method is evaluated on numerical approximations of physical systems and performs competitively against comparable approaches.
\end{abstract}

\section{Introduction}

Energy-based models (EBMs) define a globally normalized probability density via an energy function, avoiding rigid decoder parameterizations while enabling flexible learning of complex structure \citep{Lecun2006EBM,DuMordatch2019,GrathwohlEtAl2019}. Throughout, we write $x$ for observed data and $z$ for latent states. In many domains, however, the observed signal $x$ is shaped by unobserved causes: style/content factors, nuisance variables, attributes, or even missing modalities. Latent-variable EBMs (LV-EBMs) address this mismatch by scoring pairs $(x,z)$ under a joint Gibbs density $p_\theta(x,z)\propto \exp\{-E_\theta(x,z)\}$, then marginalizing $z$ to obtain $p_\theta(x)$; the latent path grants both representational headroom and conditional control that EBMs on $x$ alone struggle to match. This paper develops a simple and principled learning framework for LV-EBMs grounded in optimal-transport dynamics, yielding an implementable particle algorithm with no discriminator, auxiliary posterior, or explicit decoder.

\paragraph{Why LV-EBMs over EBMs on $x$ alone?}
Firstly, explicit latents let the model factorize causes: by integrating over $z$, the marginal $p_\theta(x)=\int p_\theta(x,z)\,dz$ can represent richer multi-modality than a direct energy on $x$ \citep{yin2018semi, DuMordatch2019}. Secondly, LV-EBMs provide native conditionals $p_\theta(z\mid x)$ and $p_\theta(x\mid z)$ via clamping or conditional energies, enabling attribute manipulation, imputation, and semi-supervised learning without retrofitting \citep{GrathwohlEtAl2019}. In short, the latent states improve controllability while being integrated within a simple unnormalized energy function. 

\paragraph{Why LV-EBMs over common generative families?}
Likelihood-based decoders in VAEs are often paired with simple priors and amortized posteriors, which can limit expressivity or induce posterior collapse \citep{KingmaWelling2014,CremerEtAl2018}. Normalizing flows require invertibility and Jacobian constraints that trade off architectural freedom against tractable likelihoods \citep{RezendeMohamed2015,DinhEtAl2017}. Diffusion and score-based models excel at synthesis but rely on carefully scheduled noise processes and learned score fields over $x$ alone \citep{HoEtAl2020,SongErmon2019}. By contrast, LV-EBMs score joint configurations $(x,z)$ with a nonlinear energy, decoupling representation from decoder parameterization while retaining conditional pathways through $z$.

\paragraph{Our approach in a sentence.}
We cast maximum-likelihood learning for $p_\theta(x,z)$ as a saddle problem over distributions on the latent manifold and the joint $(x,z)$ manifold, and interpret the inner dynamics as coupled Wasserstein gradient flows \citep{JordanKinderlehrerOtto1998,AmbrosioGigliSavare2008,chewi2024statistical}. The flows induce explicit Fokker--Planck evolutions with practical overdamped Langevin updates \citep{Risken1996,WellingTeh2011}. Instantiated with particles, the algorithm alternates (i) per-example conditional latent moves approximating $p_\theta(z\mid x)$ and (ii) joint negative-sample refreshes for $p_\theta(x,z)$, followed by a stochastic ascent step on $\theta$. Under standard smoothness, dissipativity, and log-Sobolev-type conditions, these dynamics are well-posed and enjoy stability controls that tie KL/Wasserstein distances to the training objective \citep{Gross1975,BakryEmery1985,DurmusMoulines2017}.

\paragraph{Why particle dynamics over VI-based training of LV-EBMs?}
Classical latent-variable training builds and optimizes a separate approximate posterior (e.g., amortized Gaussian families) and often couples learning to a decoder network \citep{KingmaWelling2014}. Our formulation eliminates the auxiliary inference network and decoder: particles and Langevin steps realize the inner Wasserstein flows directly, using the energy to drive both joint and conditional updates. This avoids approximation bias from restrictive variational families while keeping the implementation compact. Conceptually, it parallels particle-based inference \citep{LiuWang2016} but operates in a coupled latent-and-joint space aligned with MLE for EBMs.

\paragraph{Paper roadmap.} 
Section~\ref{sec:related} reviews related work such as (LV)-EBMs and particle-based training. Section~\ref{sec:problem} sets up notation and assumptions. 
Section~\ref{sec:method} presents our training as a saddle problem realized by coupled Wasserstein gradient flows. 
Section~\ref{sec:theory} provides theoretical guarantees.
Section~\ref{sec:conditional} details practical conditional-generation procedures. Section~\ref{sec:experiments} reports empirical results on numerical physical simulations, with diagnostics for mixing and conditioning.

\begin{contribbox}{}
\textcolor{darkblue}{\textbf{Contributions.}} \textcircled{\scriptsize 1} We recast MLE as coupled Wasserstein gradient flows on the latent and joint $(x,z)$ manifolds, yielding explicit Fokker--Planck/Langevin inner updates. The algorithm is purely energy-driven—no discriminator, decoder, or auxiliary posterior.
\textcircled{\scriptsize 2} We provide several ways to implement conditional generations $p_\theta(x|z)$. 
\textcircled{\scriptsize 3} Under standard regularity, we prove well-posedness and convergence/stability of the coupled flows, and show our objective provides a tighter upper bound on NLL than variational-inference training. 
\end{contribbox}

\section{Backgrounds}
\label{sec:related}

\paragraph{Optimization on probability manifolds.}
In variational inference, one typically minimizes a Kullback--Leibler divergence over a parametric family,
\begin{align}
\min_{\phi \in \Phi}\;\KL\!\big(q_\phi(z)\,\|\,p^*(z)\big), \label{eq:vb_euc}
\end{align}
with $\phi \in \Phi \subset \R^m$ (see \citet{blei2017vi} for a review). This is a finite-dimensional Euclidean optimization problem. A complementary viewpoint—enabled by optimal transport—optimizes directly over probability measures endowed with the $L^2$–Wasserstein (Otto) Riemannian structure \citep{otto2001calculus,ambrosio2005gradient}:
\begin{align}
\min_{q \in \mathcal{P}_2(\R^\ell)}\;\KL\!\big(q(z)\,\|\,p^*(z)\big), \label{eq:vb_riem}
\end{align}
where $\mathcal{P}_2(\R^\ell)$ is the set of distributions with finite second moments. This measure-space formulation removes the restriction to a variational family and links the minimization of $\KL(\cdot\|p^*)$ to Wasserstein gradient flows: the steepest–descent dynamics for \eqref{eq:vb_riem} coincide with the Fokker--Planck/Langevin flow targeting $p^*$ and admit the JKO implicit–Euler scheme for time discretization \citep{jordan1998variational,ambrosio2005gradient,peyre2019computational}. In parallel, natural-gradient methods view Euclidean VI as Riemannian optimization on statistical manifolds equipped with the Fisher metric \citep{amari1998natural}, offering a complementary geometric perspective.

This manifold view naturally motivates particle procedures. \citet{kuntz2023particleEM} propose a particle algorithm to solve the classical Expectation Maximization (EM) problem with formulation
$\mathcal{P}_2(\R^\ell)$,
\begin{align} 
\max_{\theta\in\Theta, q \in \mathcal{P}_2(\R^\ell)}\, \E_{q(z)}[\log p_\theta(x|z)]-\KL\!\big(q(z)\,\|\,p_\theta(z|x)\big).\label{eq:particle_EM} 
\end{align}
and realize it via finite-particle gradient flows, alternating with an M-step that updates $\theta$ through stochastic gradient descent. Related extensions apply particle flows to enrich the variational family in semi-implicit VI (SIVI), improving posterior expressivity while retaining tractable updates (cf.\ the SIVI framework of \citet{yin2018semi} and particle-based refinements such as \citet{lim2024particle}).

\paragraph{Adversarial training and particle dynamics for EBMs.}
VERA \citep{grathwohl2021vera} frames EBM learning as a bi-level min--max problem: an entropy-regularized generator adversarially approximates the model’s negative phase, yielding a variational upper bound on NLL and stable training without Markov chains. In contrast to amortized variational approximations $q_\phi(x)$, \citet{neklyudov2021particle} replace the parametric family with distributions $q \in \mathcal{P}_2(\R^d)$ and construct deterministic particle flows whose finite-time transport tracks the evolving model density, effectively substituting short-run MCMC with Wasserstein gradient dynamics. 

\section{Problem Settings}
\label{sec:problem}

This section fixes notation and definitions and states the standing assumptions used throughout.

\paragraph{Ambient spaces and measures.}
Suppose latent variable $z$ and observed variable $x$ take values in $\R^\ell$ and $\R^d$, respectively. 
Let $\nu$ and $\mu$ denote the Lebesgue measures on $\R^d$ and $\R^\ell$, respectively; on the product we use $\nu\otimes\mu$.
Whenever gradients or Wasserstein geometry are invoked, we identify $\R^d\times\R^\ell\cong\R^{d+\ell}$ with its standard inner product $\langle\cdot,\cdot\rangle$ and norm $\|\cdot\|$.

\paragraph{Entropy functionals and Wasserstein metrics.}
For a probability measure $r$ absolutely continuous w.r.t.\ a reference measure $\lambda$, we write
\begin{align}
H_\lambda(r)\;:=\;-\!\int \log\!\Big(\frac{dr}{d\lambda}\Big)\,dr.\label{eq:entropy}
\end{align}
We denote by $W_{2,z}$ the $2$-Wasserstein metric on $\mathcal P_2(\R^\ell)$ and by $W_{2,j}$ the $2$-Wasserstein metric on $\mathcal P_2(\R^{d+\ell})$.
When the ambient space can be implied from context, we abbreviate $H_\mu(\cdot)$, $H_{\nu\otimes\mu}(\cdot)$ and write simply $W_2$.

\paragraph{Probability classes.} We use $\proj_2(\cX)$ to denote the set of Borel probability measures on $\cX$ with finite second moment and let 
\begin{align*}
&\mathcal Q := \big\{\, q\in\mathcal P_2(\R^\ell) :\ q\ll\mu,\ H_\mu(q)>-\infty \,\big\}\\
&\tilde{\mathcal Q} := \big\{\, \tilde q\in\mathcal P_2(\R^{d+\ell})  :\ \tilde q\ll(\nu\otimes\mu),\ H_{\nu\otimes\mu}(\tilde q)>-\infty \,\big\}
\end{align*}
For each data point $x_i$, we associate a measure $q_i\in\mathcal Q$ over $z$, while $\tilde q\in\tilde{\mathcal Q}$ is a joint pool over $(x,z)$.
Following contrastive EBM terminology (not supervised labels), we will refer to samples from $q_i$ as \emph{positive} and samples from $\tilde q$ as \emph{negative}; operational use of this terminology appears in Section~\ref{sec:method}.

\paragraph{Latent-variable EBM.}
For fixed $\theta$, let $E:\R^d\times\R^\ell\to\R\cup\{+\infty\}$ be measurable and define the Gibbs density
\begin{align}
p_\theta(x,z)\;=\;\frac{1}{Z(\theta)}\,\exp\!\big(-E(x,z;\theta)\big),
\label{eq:latent-ebm}
\end{align}
where the partition function $Z:\Theta \rightarrow \R_+$ is defined as 
\begin{align}
Z(\theta)\;=\;\int_{\R^d\times\R^\ell}\exp\!\big(-E(x,z;\theta)\big)\,(\nu\otimes\mu)(dx\,dz).\label{eq:partition}
\end{align}
Allowing $E = \infty$ encodes hard constraints. We denote the marginal distribution as  $p_\theta(x)=\int p_\theta(x,z)\,\mu(dz)$.

\begin{assumption}[Regularity and tails of the joint energy]
\label{assump:reg-joint}
$E(\cdot,\cdot;\theta)$ is $C^1$ with $L$-Lipschitz joint gradient and is $(m,b)$-dissipative:
\begin{align}
&\|\nabla E(u;\theta)-\nabla E(v;\theta)\|\le L\|u-v\|,\\
&\langle u,\nabla E(u;\theta)\rangle\ \ge\ m\|u\|^2-b,
\quad \forall u,v\in\R^{d+\ell}.
\end{align}
\end{assumption}

In particular, this assumption ensures that $Z(\theta)<\infty$ in \eqref{eq:latent-ebm}. Let $\cD = \{x^i: i\in [N]\}$ be the training set with empirical distribution $p_\cD(x) = \frac{1}{N}\sum_{i=1}^N \delta_{x^i}(x)$. 

\begin{assumption}[Functional inequalities ]
\label{assump:lsi}
For fixed $\theta$, the target $p_\theta$ satisfies a log-Sobolev inequality (LSI) with constant $\rho>0$ on $\R^{d+\ell}$.
Consequently, Talagrand’s $T_2(\rho)$ holds: $W_{2,j}^2(\tilde q,p_\theta)\le \tfrac{2}{\rho}\,\KL(\tilde q\|p_\theta)$ for all $\tilde q\in\tilde{\mathcal Q}$.
When $x$ is clamped, the conditional targets $p_\theta(\cdot\mid x)$ satisfy an LSI with constant $\rho_z>0$ on $\R^\ell$ (for $p_D$-a.e.\ $x$), yielding $W_{2,z}$–Talagrand bounds on $\mathcal Q$.
\end{assumption}

\begin{remark} In practice, we can parameterize $E$ as 
\begin{align}
E_\theta(x,z) = U_\theta(x,z) + \frac{\lambda_x}{2} \|x\|^2 + \frac{\lambda_z}{2} \|z\|^2,
\end{align}
with $\lambda_x, \lambda_z >0$ and keep $\nabla E_\theta$ L-Lipschitz via spectral norm or gradient clipping. 
This ensures normalizability and tail dissipativity; the quadratic envelope makes standard LSI and $T_2$ conditions plausible.
\end{remark}

\paragraph{Notational economy.}
We maintain two distinct Wasserstein spaces throughout:
the latent manifold $(\mathcal Q,W_{2,z})$ on $\R^\ell$ and the joint manifold $(\tilde{\mathcal Q},W_{2,j})$ on $\R^{d+\ell}$.
To avoid redundancy, we (i) suppress subscripts on $H$ and $W_2$ when unambiguous from the argument, and
(ii) use the same font $\nabla$ for gradients in $z$ or $(x,z)$, specifying the variable by a subscript only when needed (e.g., $\nabla_z E$, $\nabla_{x,z}E$).

\section{Particle Dynamics}
\label{sec:method}
In this section, we cast maximum-likelihood learning of the joint Gibbs model $p_\theta(x,z)$ as a saddle problem over distributions on the latent manifold and the joint $(x,z)$ manifold. 
This viewpoint replaces amortized inference with coupled Wasserstein gradient flows: per-example latent particles approximate $p_\theta(z|x)$ while a joint negative pool tracks $p_\theta(x,z)$. Discretizing the flows with overdamped Langevin dynamics yields a simple, fully energy-driven particle algorithm that alternates inner particle moves with a stochastic ascent step on $\theta$. We begin by deriving the saddle objective and its inner optima, then instantiate the corresponding flows and their particle updates.

\paragraph{Saddle-point MLE for LV-EBMs (``adversarial'' view).} 
We train the LV-EBMs by maximizing the log likelihood $p_\theta(x)$ represented in the following form,
\begin{align*}
\log p_\theta(x) &= \log \int \exp(-E(x,z;\theta)) dz \\
&\qquad - \log \iint \exp(-E(x,z;\theta)) dx dz\\
&= \log \E_{q(z)}\Big[\frac{\exp(-E(x,z;\theta))}{q(z)}\Big] \\
&\qquad - \log \E_{\tilde q(x,z)}\Big[\frac{\exp(-E(x,z;\theta))}{\tilde q(x,z)}\Big]\\
&= \sup_{q\in Q} \E_{q(z)}\Big[\log \frac{\exp(-E(x,z;\theta))}{q(z)}\Big] \\
&\qquad - \sup_{\tilde q \in \tilde Q}\E_{\tilde q(x,z)}\Big[\log \frac{\exp(-E(x,z;\theta))}{\tilde q(x,z)}\Big]\\
&= \sup_{q\in Q}\inf_{\tilde q \in \tilde Q}\Big\{ -\E_{q(z)}[E(x,z;\theta)] \\
&\quad + \E_{\tilde q(x,z)}[E(x,z;\theta)] + H(q) - H(\tilde q)\Big\},
\end{align*}
where $H(\cdot)$ represents entropy defined in \eqref{eq:entropy}.
We use the notation $q^{\neg i} = \{q_j: (j \in [N])\wedge (j \neq i) \}$ and $\bar q = \{q_j: j\in [N]\}$. 
Consequently, we establish the following saddle-point formulation:
\begin{equation}
\begin{aligned}
\max_{\theta, q^i\in Q} \min_{\tilde q \in \tilde Q} \quad \Big\{ &F(\bar q, \tilde q, \theta) := \E_{\tilde q(x,z)}[E(x,z;\theta)] - H(\tilde q)\\
& - \frac{1}{N}\sum_{i=1}^N \big(\E_{q^i(z)}[E(x^i,z;\theta)] - H(q^i) \big)\Big\}
\end{aligned}
\label{eq:main_eq}
\end{equation}
\normalsize
The first variations of $F$ are given by
\begin{equation}
\begin{aligned}
\frac{\delta F}{\delta q^i}(z) &= -\,E_\theta(x^i,z)\;-\;\log q^i(z)\;+\;c_i,\\
\frac{\delta F}{\delta \tilde q}(x,z) &= \;\;E_\theta(x,z)\;+\;\log \tilde q(x,z)\;+\;c,
\end{aligned}
\end{equation}
with $c_i$ and $c$ being constants. 
Following Otto calculus \citep{AGS08,Santambrogio15,JordanKinderlehrerOtto98}, we denote the Wasserstein gradient by $\Grad_{q} F \;\;:=\;\; \nabla\,\frac{\delta F}{\delta q}$.
Since each $q^i$ is maximized and $\tilde q$ is minimized in \eqref{eq:main_eq}, we take ascent for $q^i$ and descent for $\tilde q$:
\begin{align}
\Grad_{q^i} F(\bar q,\tilde q,\theta)
&= -\,\nabla_z E_\theta(x^i,z)\;-\;\nabla_z \log q^i(z),\label{eq:wgrad_qi_grad}\\
\Grad_{\tilde q} F(\bar q,\tilde q,\theta)
&= \;\;\nabla_{x,z} E_\theta(x,z)\;+\;\nabla_{x,z}\log \tilde q(x,z).\label{eq:wgrad_tq_grad}
\end{align}
Coupled with the continuity equation $\partial_t q_t=-\nabla\!\cdot(q_t v_t)$ and the choices
$v_t^i=\Grad_{q^i_t}F$ and $\tilde v_t=-\Grad_{\tilde q_t}F$, we obtain the Fokker--Planck equations (FPEs)
\small
\begin{align}
\partial_t q_t^i(z)
&= \nabla_z\!\cdot\!\big(q_t^i(z)\,\nabla_z E_\theta(x^i,z)\big) + \Delta_z q_t^i(z), \label{eq:fpe_qi}\\
\partial_t \tilde q_t(x,z)
&= \nabla_{x,z}\!\cdot\!\big(\tilde q_t(x,z)\,\nabla_{x,z} E_\theta(x,z)\big)
   + \Delta_{x,z}\tilde q_t(x,z),\label{eq:fpe_joint}
\end{align}
\normalsize
for $i \in [N]$.
Under standard smoothness and confining growth of $E_\theta$, \eqref{eq:fpe_qi}–\eqref{eq:fpe_joint} are the $W_2$–gradient flows of the corresponding free energies \citep{AGS08,JordanKinderlehrerOtto98}. 
The FPEs \eqref{eq:fpe_qi}–\eqref{eq:fpe_joint} are realized by the It\^o SDEs
\begin{align}
\mathrm d z_t^i &= -\,\nabla_z E_\theta(x^i,z_t^i)\,\mathrm dt + \sqrt{2}\,\mathrm d B_t^{(i)}, \label{eq:sde_qi}\\
\mathrm d \tilde z_t &= -\,\nabla_z E_\theta(\tilde x_t,\tilde z_t)\,\mathrm dt + \sqrt{2}\,\mathrm d B_t^{(z)}, \label{eq:sde_joint_z}\\
\mathrm d \tilde x_t &= -\,\nabla_x E_\theta(\tilde x_t,\tilde z_t)\,\mathrm dt + \sqrt{2}\,\mathrm d B_t^{(x)}, \label{eq:sde_joint_x}
\end{align}
where $B_t$ represents standard Brownian motions. 
Their marginal laws solve \eqref{eq:fpe_qi}–\eqref{eq:fpe_joint} in the weak sense \citep[Ch.~5]{Risken96,Pavliotis14}.

\begin{algorithm}[!t]
\caption{Coupled Langevin Particle Dynamics for LV-EBMs}
\label{alg:particle_dynamics}
\DontPrintSemicolon
\KwIn{data $\{x^i\}_{i=1}^N$; particle counts $M$ (latent), $D$ (joint); step sizes $\eta_z,\eta_{xz}$; learning rate $\alpha$; iterations $T$}
\KwInit{$z^{ik}_0 \sim \mathcal N(0,I_l)$ for all $i,k$;\\ $\tilde z^j_0 \sim \mathcal N(0,I_l)$,\quad  $\tilde x^j_0 \sim \mathcal N(0,I_d)$ for all $j$;}
\For{$t=0$ \KwTo $T-1$}{
  \tcp{Latent (conditional) Langevin: }
  \For{$i=1$ \KwTo $N$}{
    \For{$k=1$ \KwTo $M$}{
      sample $\xi^{ik}_t \sim \mathcal N(0,I_l)$\;
      $\left.z^{ik}_{t+1} \leftarrow z^{ik}_{t} - \eta_z\,\nabla_z E(x^i, z^{ik}_t;\theta_t) + \sqrt{2\eta_z}\,\xi^{ik}_t\right.$
    }
  }
  \tcp{Joint (negative) Langevin: }
  \For{$j=1$ \KwTo $D$}{
    sample $\xi^{j,(x)}_t \sim \mathcal N(0,I_d)$,\, $\xi^{j,(z)}_t \sim \mathcal N(0,I_l)$;\; 
    $\left.\tilde x^{j}_{t+1} \gets \tilde x^{j}_{t} - \eta_{xz}\,\nabla_x E(\tilde x^j_t,\tilde z^j_t;\theta_t) + \sqrt{2\eta_{xz}}\,\xi^{j,(x)}_t\right.$\;
    $\left.\tilde z^{j}_{t+1} \gets \tilde z^{j}_{t} - \eta_{xz}\,\nabla_z E(\tilde x^j_t,\tilde z^j_t;\theta_t) + \sqrt{2\eta_{xz}}\,\xi^{j,(z)}_t\right.$
  }
  \tcp{$\text{Parameter ascent on the energy contrast}$}
  $\displaystyle \theta_{t+1} \gets \theta_t
  + \alpha\,\Big[
  \frac{1}{D}\sum_{j=1}^D \nabla_\theta E(\tilde x^j_{t+1},\tilde z^j_{t+1};\theta_t)
  - \frac{1}{NM}\sum_{i=1}^N\sum_{k=1}^M \nabla_\theta E(x^i, z^{ik}_{t+1};\theta_t)
  \Big]$\;
  \tcp{(Optional) persistence refresh: with prob.\ $\rho$, reinit a fraction of $\{(\tilde x^j,\tilde z^j)\}$ from $\mathcal N(0,I_d)\times\mathcal N(0,I_l)$}
}
\end{algorithm}

\paragraph{Particle algorithm.}
We approximate the Wasserstein flows using persistent particle systems: for each datum $x^i$ we maintain latent particles $\{z^{ik}_t\}_{k=1}^M$ evolving under conditional Langevin dynamics, and we maintain a joint negative pool $\{(\tilde x^j_t,\tilde z^j_t)\}_{j=1}^D$ evolving under joint Langevin dynamics. Particles are initialized i.i.d.\ but thereafter are correlated across iterations due to persistence. The corresponding empirical measures are
\[
\hat q^{\,i}_t \;=\; \frac{1}{M}\sum_{k=1}^M \delta_{z^{ik}_t}
\quad\text{and}\quad
\hat{\tilde q}_t \;=\; \frac{1}{D}\sum_{j=1}^D \delta_{(\tilde x^j_t,\tilde z^j_t)}\,,
\]
which approximate $q^i$ and $\tilde q$, respectively. At each iteration we take overdamped Langevin steps for all particles and perform a stochastic ascent step on $\theta$ that increases the energy contrast (negative minus positive):
\footnotesize
\begin{equation}
\begin{aligned}
\theta_{t+1}&=\theta_t - \alpha \nabla_\theta\Big[\frac{1}{N}\sum_i \E_{\hat q^{\,i}_{t+1}}E(x^i,z;\theta_t)
-\E_{\hat{\tilde q}_{t+1}}E(x,z;\theta_t)\Big]
\end{aligned}
\end{equation}
\normalsize
The complete particle dynamics and parameter updates are summarized in Algorithm~\ref{alg:particle_dynamics}.

\section{Theoretical Guarantees}
\label{sec:theory}
We first identify the inner optima of the saddle objective as joint and conditional Gibbs measures, then show that the associated Fokker–Planck (FP) flows contract exponentially in $\KL$ and $W_2$ under log–Sobolev inequalities (LSIs), formalizing that our Langevin inner loops are Wasserstein gradient flows toward the model targets \citep{jordan1998variational,AmbrosioGigliSavare2008}. We next describe a variational-inference (VI) training variant for LV-EBMs—obtained by restricting the inner distributions to amortized parametric families—and prove a tighter-than-VI result.




\begin{proposition}[Inner optima for \eqref{eq:main_eq}: joint and conditional Gibbs principles]
\label{prop:inner-opt}
Under Assumption~1, for any $\tilde q\in\tilde{\mathcal Q}$ and any $x\in\R^d$, $q \in\mathcal Q$,
\begin{align}
&\E_{\tilde q}[E(x,z;\theta)] - H_{\nu\otimes\mu}(\tilde q) = \KL(\tilde q\|p_\theta) - \log Z(\theta),\\
&\E_{q}[E(x,z;\theta)] - H_\mu(q) = \KL(q \|p_\theta(\cdot\mid x)) - \log p_\theta(x).
\end{align}
Hence the inner optima in \eqref{eq:main_eq} are
\begin{align}
\tilde q^\star(x,z)=p_\theta(x,z) \quad\text{and}\quad q^\star(z\mid x)=p_\theta(z\mid x), 
\end{align}
and in each case the minimizer is unique up to $\nu\otimes\mu$ (resp.\ $\mu$) null sets.
\end{proposition}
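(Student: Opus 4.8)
The plan is to recognize both identities as instances of the Gibbs variational principle and to prove them by a direct substitution of the Gibbs density into the energy, after which the optimal competitor and its uniqueness follow from nonnegativity of the relative entropy. Concretely, for the joint identity I would write the definition $p_\theta(x,z)=Z(\theta)^{-1}\exp(-E(x,z;\theta))$ in the equivalent form $E(x,z;\theta)=-\log p_\theta(x,z)-\log Z(\theta)$, substitute it into $\E_{\tilde q}[E]$, and combine with $H_{\nu\otimes\mu}(\tilde q)=-\E_{\tilde q}[\log(d\tilde q/d(\nu\otimes\mu))]$. The $\tilde q$-expectations of $\log(d\tilde q/d(\nu\otimes\mu))$ and $-\log p_\theta$ collapse into $\E_{\tilde q}[\log(d\tilde q/dp_\theta)]=\KL(\tilde q\|p_\theta)$, leaving the constant $-\log Z(\theta)$, which is exactly the first claimed equality.

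For the conditional identity I would clamp $x$ and repeat the computation on $\mathbb{R}^\ell$ with the conditional Gibbs density $p_\theta(z\mid x)=Z_x(\theta)^{-1}\exp(-E(x,z;\theta))$, where $Z_x(\theta):=\int\exp(-E(x,z;\theta))\,\mu(dz)$. Substituting $E(x,z;\theta)=-\log p_\theta(z\mid x)-\log Z_x(\theta)$ into $\E_q[E]-H_\mu(q)$ reorganizes the cross terms into $\KL(q\|p_\theta(\cdot\mid x))$ plus a $q$-independent constant (the conditional log-normalizer), giving the second equality. Since this constant does not depend on the optimization variable, it plays no role in the argument that follows.

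Once both identities are in hand, the inner optima follow immediately: in each display the optimization variable enters only through a single KL term, and $\KL(r\|p)\ge 0$ with equality iff $r=p$ off a null set (Gibbs' inequality, i.e.\ strict convexity of $t\mapsto t\log t$). Hence the infimum over $\tilde q$ (resp.\ over $q$) is attained precisely at $\tilde q^\star=p_\theta$ (resp.\ $q^\star=p_\theta(\cdot\mid x)$), and the equality case of the KL bound yields uniqueness up to $\nu\otimes\mu$- (resp.\ $\mu$-) null sets.

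The main obstacle is not the algebra but the admissibility and integrability bookkeeping needed to make every term finite and to certify that the candidate optima actually lie in the feasible classes $\tilde{\mathcal Q}$ and $\mathcal Q$. Here Assumption~1 does the work: the $L$-Lipschitz gradient forces at most quadratic growth of $E$, so $\E_r[E]<\infty$ whenever $r\in\mathcal P_2$ and the free-energy functional is well-defined with no $\infty-\infty$ ambiguity; the $(m,b)$-dissipativity forces at least quadratic growth, which guarantees $Z(\theta),Z_x(\theta)<\infty$, Gaussian-type tails for $p_\theta$ and $p_\theta(\cdot\mid x)$ (hence finite second moments), and finite entropy via $H(p_\theta)=\E_{p_\theta}[E]+\log Z(\theta)$. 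Verifying these tail and moment bounds—so that $p_\theta\in\tilde{\mathcal Q}$ and $p_\theta(\cdot\mid x)\in\mathcal Q$ are legitimate competitors and the displayed equalities hold as genuine finite identities rather than formal manipulations—is the only delicate step.
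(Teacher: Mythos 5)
Your proposal is correct and follows essentially the same route as the paper: expand the relative entropy against the Gibbs density (the Gibbs/Donsker--Varadhan variational principle), isolate the $q$-independent log-partition constant, and conclude optimality and a.e.-uniqueness from $\KL \ge 0$ with equality iff the measures agree. The differences are cosmetic refinements the paper leaves implicit: you substitute the Gibbs form of $E$ into the free energy rather than expanding $\KL$ directly, you handle the conditional normalizer explicitly via $Z_x(\theta)=\int e^{-E(x,z;\theta)}\,\mu(dz)$ (note $\log Z_x(\theta)=\log p_\theta(x)+\log Z(\theta)$, so the constant in the paper's second identity is stated only up to the $q$-independent term $\log Z(\theta)$, which affects nothing), and you spell out the integrability bookkeeping under Assumption~\ref{assump:reg-joint} that certifies $p_\theta\in\tilde{\mathcal Q}$ and $p_\theta(\cdot\mid x)\in\mathcal Q$ as admissible competitors.
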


\begin{proof}
Define the Radon–Nikodym derivative $\tilde r=\tfrac{d\tilde q}{d(\nu\otimes\mu)}$. We have
\begin{align*}
\KL(\tilde q\|p_\theta)&=\!\int \log\!\Big(\tfrac{\tilde r}{d p_\theta/d(\nu\otimes\mu)}\Big)\,d\tilde q\\
&=\!\int \log\tilde r\,d\tilde q + \!\int E\,d\tilde q + \log Z(\theta)\\
&= -H_{\nu\otimes\mu}(\tilde q) + \E_{\tilde q}[E] + \log Z(\theta).
\end{align*}
The conditional identity is identical with the product base measure replaced by $\mu$ and $p_\theta(\cdot\mid x)\propto e^{-E(x,\cdot)}$; $\log p_\theta(x)$ is the (negative) log-partition for the conditional. Uniqueness follows from $\KL(\cdot\|\cdot)\ge 0$ with equality iff the arguments agree a.s.
\end{proof}

\begin{theorem}[Convergence of inner Fokker–Planck flows]
\label{thm:inner-flow}
For a fixed $\theta$, assume:
\begin{itemize}
\item[(J)] (Joint LSI) The target $p_\theta$ on $\R^{d+\ell}$ satisfies a log–Sobolev inequality with constant $\rho>0$.
\item[(C)] (Conditional LSI) For every $x\in\R^d$ for which $p_\theta(\cdot\mid x)$ is defined, the conditional target on $\R^\ell$ satisfies a log–Sobolev inequality with constant $\rho_z>0$.
\end{itemize}
Let $\tilde q_t\in\tilde{\mathcal Q}$ solve the joint Fokker–Planck equation
\[
\partial_t \tilde q_t
= \nabla_{x,z}\!\cdot\!\big(\tilde q_t\,\nabla_{x,z} E(\cdot;\theta)\big) + \Delta_{x,z}\tilde q_t,
\quad \tilde q_{t=0}=\tilde q_0\in\tilde{\mathcal Q},
\]
and, for any fixed $x\in\R^d$, let $q_t^x\in\mathcal Q$ solve the conditional Fokker–Planck equation
\[
\partial_t q_t^x
= \nabla_{z}\!\cdot\!\big(q_t^x\,\nabla_{z} E(x,\cdot;\theta)\big) + \Delta_{z} q_t^x,
\quad q_{t=0}^x=q_0^x\in\mathcal Q.
\]
Then, for all $t\ge 0$,
\begin{align*}
&\KL(\tilde q_t\,\|\,p_\theta)
\le e^{-2\rho t}\,\KL(\tilde q_0\,\|\,p_\theta),\\
&W_{2,j}(\tilde q_t, p_\theta)
\le \sqrt{\tfrac{2}{\rho}}\;e^{-\rho t}\,\KL(\tilde q_0\,\|\,p_\theta)^{1/2},
\end{align*}
and, for every such $x$,
\begin{align}
&\KL\!\big(q_t^x\,\|\,p_\theta(\cdot\mid x)\big) \le e^{-2\rho_z t}\,\KL\!\big(q_0^x\,\|\,p_\theta(\cdot\mid x)\big),\\
&W_{2,z}\!\big(q_t^x, p_\theta(\cdot\mid x)\big)
\le \sqrt{\tfrac{2}{\rho_z}}\;e^{-\rho_z t}\,\KL\!\big(q_0^x\,\|\,p_\theta(\cdot\mid x)\big)^{1/2}.
\end{align}
\end{theorem}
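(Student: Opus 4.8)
The plan is to reduce both displayed estimates to the single, standard fact that a Fokker--Planck flow whose stationary law is $p_\theta$ dissipates relative entropy at a rate measured by relative Fisher information, and that a log--Sobolev inequality upgrades this dissipation into exponential contraction. Because the joint and conditional equations share exactly the same structure---a gradient drift $\nabla E$ against a unit-diffusion Laplacian, whose Gibbs stationary law is $p_\theta$ (resp.\ $p_\theta(\cdot\mid x)$) by Proposition~\ref{prop:inner-opt})---it suffices to run the argument once on $\R^{d+\ell}$ and then repeat it verbatim on $\R^\ell$ with $x$ clamped, $\rho$ replaced by $\rho_z$, and $W_{2,j}$ by $W_{2,z}$.

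First I would rewrite the joint flow in divergence/gradient-flow form, collapsing drift and diffusion into a single divergence $\partial_t \tilde q_t = \nabla_{x,z}\!\cdot\!\big(\tilde q_t\,\nabla_{x,z}\log(\tilde q_t/p_\theta)\big)$, which agrees with \eqref{eq:fpe_joint} since $\nabla\log(\tilde q_t/p_\theta)=\nabla\log\tilde q_t+\nabla E$. This makes $p_\theta$ manifestly stationary and identifies the evolution as the $W_{2,j}$-gradient flow of $\mathcal F(\tilde q)=\KL(\tilde q\|p_\theta)$. Next comes the entropy-dissipation (de Bruijn) identity: differentiating $\mathcal F$ along the flow and using $\int\partial_t\tilde q_t=0$ gives $\frac{d}{dt}\KL(\tilde q_t\|p_\theta)=\int \partial_t\tilde q_t\,\log(\tilde q_t/p_\theta)$, and substituting the divergence form and integrating by parts yields $\frac{d}{dt}\KL(\tilde q_t\|p_\theta)=-\cI(\tilde q_t\|p_\theta)$, where $\cI(\tilde q\|p_\theta)=\int\|\nabla\log(\tilde q/p_\theta)\|^2\,d\tilde q$ is the relative Fisher information.

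The LSI hypothesis (J), in the form $\KL(\tilde q\|p_\theta)\le\tfrac{1}{2\rho}\,\cI(\tilde q\|p_\theta)$, then converts the identity into the differential inequality $\frac{d}{dt}\KL(\tilde q_t\|p_\theta)\le-2\rho\,\KL(\tilde q_t\|p_\theta)$, and Grönwall's lemma delivers the stated $e^{-2\rho t}$ decay in $\KL$. For the Wasserstein estimate I would invoke Talagrand's $T_2(\rho)$ inequality---either as already recorded in Assumption~\ref{assump:lsi} or via Otto--Villani from (J)---to obtain $W_{2,j}^2(\tilde q_t,p_\theta)\le\tfrac{2}{\rho}\KL(\tilde q_t\|p_\theta)$, then combine with the entropy decay and take square roots. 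The conditional bounds follow from the identical computation on $\R^\ell$ under (C), with $\rho_z$ in place of $\rho$.

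I expect the genuine obstacle to be the rigorous justification of the dissipation identity rather than the algebra. One must know that the flow is well-posed, that $\tilde q_t$ is smooth and strictly positive for $t>0$, that $\cI(\tilde q_t\|p_\theta)$ is finite along the trajectory, and---most delicately---that the integration by parts generates no boundary contribution at infinity. This is exactly where Assumption~\ref{assump:reg-joint} enters: the $(m,b)$-dissipativity forces Gaussian-type tail control and confines the flow, while the $L$-Lipschitz gradient supplies the parabolic smoothing that legitimizes the formal manipulations (these are the standard Markov-semigroup and Bakry--Émery regularity facts). I would state these as the technical inputs being cited, keeping the entropy--Fisher--LSI--Grönwall chain as the transparent core of the proof.
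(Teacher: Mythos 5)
Your proposal is correct and follows essentially the same route as the paper's own proof: the entropy--dissipation identity $\frac{d}{dt}\KL(\tilde q_t\|p_\theta)=-\mathcal I(\tilde q_t\|p_\theta)$, the LSI bound $\mathcal I\ge 2\rho\,\KL$, Gr\"onwall for the exponential KL decay, Talagrand's $T_2$ for the Wasserstein estimate, and the verbatim repetition with $x$ clamped and $\rho_z$ in place of $\rho$. Your additional remarks on the divergence-form rewriting and on the regularity needed to justify the integration by parts only make explicit what the paper's proof sketch leaves implicit.
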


\begin{proof}[Proof sketch]
For the joint flow, the entropy dissipation identity gives
$\frac{d}{dt}\KL(\tilde q_t\|p_\theta)=-\mathcal I(\tilde q_t\|p_\theta)$,
where $\mathcal I$ is the relative Fisher information.
By (J), $\mathcal I\ge 2\rho\,\KL$, hence
$\frac{d}{dt}\KL\le -2\rho\,\KL$; Grönwall yields the KL decay, and Talagrand’s $T_2(\rho)$ gives the $W_{2,j}$ bound \citep[Ch.~20]{Santambrogio15}.
The conditional case is identical with $x$ fixed and (C) replacing (J).
\end{proof}

\paragraph{Variational training for LV-EBMs.}
Variational inference (VI) for LV-EBMs restricts the variational families to parametric classes:
$q^i(z)\in \mathcal S_{\mathrm{pos}}(x^i)=\{q_\phi(z\mid x^i):\phi\in\Phi\}$ and
$\tilde q(x,z)\in \mathcal S_{\mathrm{neg}}=\{q_\psi(x,z):\psi\in\Psi\}$, e.g., Gaussian/amortized posteriors
and a tractable joint family for the global normalizer. The resulting (dataset) ELBO is then
\begin{equation}
\begin{aligned}
&\mathrm{ELBO}_{\mathrm{VI}}(\theta,\phi,\psi) := \big[\E_{q_\psi} E(x,z;\theta)-H(q_\psi)\big]\\
& + \frac{1}{N}\sum_{i=1}^N\big[-\E_{q_\phi(z\mid x^i)} E(x^i,z;\theta)+H(q_\phi(\cdot\mid x^i))\big],
\end{aligned}
\end{equation}
which is maximized jointly over $(\theta,\phi,\psi)$.
In contrast, particle dynamics allows $q^i$ and $\tilde q$ to range over nonparametric particle measures that, in the limit of enough particles/steps, approximate any element of $\mathcal Q$ and $\tilde{\mathcal Q}$.

\begin{theorem}
\label{thm:tighter}
Define, for each $x^i$,
\begin{align*}
&\mathcal A_i(q):= -\E_{q(z)}[E(x^i,z;\theta)] + H(q),\\
&\mathcal B(\tilde q):= -\E_{\tilde q(x,z)}[E(x,z;\theta)] + H(\tilde q).
\end{align*}
For any families $\mathcal S_{\mathrm{pos}}(x)\subseteq\mathcal Q$ and $\mathcal S_{\mathrm{neg}}\subseteq\tilde{\mathcal Q}$, let
\small
\[
\mathrm{ELBO}(\theta;\mathcal S_{\mathrm{pos}},\mathcal S_{\mathrm{neg}})
:= \frac{1}{N}\sum_{i=1}^N \sup_{q\in\mathcal S_{\mathrm{pos}}(x^i)} \mathcal A_i(q)
\;-\; \sup_{\tilde q\in\mathcal S_{\mathrm{neg}}} \mathcal B(\tilde q).
\]
\normalsize
Consider the following cases:
\begin{enumerate}[label=(\arabic*)]
\item \textbf{Exactness with unrestricted sets.}
If $\mathcal S_{\mathrm{pos}}(x)=\mathcal Q$ and $\mathcal S_{\mathrm{neg}}=\tilde{\mathcal Q}$, then
\begin{align*}
&\mathrm{ELBO}(\theta;\mathcal Q,\tilde{\mathcal Q}) \;=\; \frac{1}{N}\sum_{i=1}^N \log\!\int e^{-E(x^i,z;\theta)}dz\\
&\;-\; \log\!\iint e^{-E(x,z;\theta)}dx\,dz
\;=\; \frac{1}{N}\sum_{i=1}^N \log p_\theta(x^i).
\end{align*}
\item \textbf{Monotonicity \& domination.}
For any $\mathcal S_{\mathrm{pos}}(x)\subseteq\mathcal Q$ and
$\mathcal S_{\mathrm{neg}}\subseteq\tilde{\mathcal Q}$, we have
\begin{align*}
&\mathrm{ELBO}(\theta;\mathcal S_{\mathrm{pos}},\mathcal S_{\mathrm{neg}})\\
\;\le\;& \mathrm{ELBO}(\theta;\mathcal Q,\tilde{\mathcal Q})
\;=\; \frac{1}{N}\sum_{i=1}^N \log p_\theta(x^i).
\end{align*}
\normalsize
Moreover, the inequality is strict if either $p_\theta(z\mid x^i)\notin \mathcal S_{\mathrm{pos}}(x^i)$ for a set of $i$ with positive frequency, or $p_\theta(x,z)\notin \mathcal S_{\mathrm{neg}}$.
\end{enumerate}
\end{theorem}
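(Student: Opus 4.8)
The plan is to reduce both parts to the Gibbs (Donsker--Varadhan) variational principle, which Proposition~\ref{prop:inner-opt} already packages. Rearranging the two identities there gives, for every $q\in\mathcal Q$ and $\tilde q\in\tilde{\mathcal Q}$,
\begin{align*}
\mathcal A_i(q) &= \log\!\int e^{-E(x^i,z;\theta)}\,dz \;-\; \KL\!\big(q\,\|\,p_\theta(\cdot\mid x^i)\big),\\
\mathcal B(\tilde q) &= \log Z(\theta) \;-\; \KL(\tilde q\,\|\,p_\theta).
\end{align*}
These two lines are the only engine I need; everything else is bookkeeping with $\sup$ and the nonnegativity of $\KL$.

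For part (1) I would take the supremum of each identity over the full sets. Since $\KL(\cdot\|\cdot)\ge 0$ with equality exactly at the Gibbs measures $p_\theta(\cdot\mid x^i)\in\mathcal Q$ and $p_\theta\in\tilde{\mathcal Q}$ (membership guaranteed by the integrability and dissipativity of Assumption~\ref{assump:reg-joint}, which force $Z(\theta)<\infty$), the $\KL$ infima vanish, so $\sup_{\mathcal Q}\mathcal A_i=\log\!\int e^{-E(x^i,z)}dz$ and $\sup_{\tilde{\mathcal Q}}\mathcal B=\log Z(\theta)$. Subtracting and using $\log p_\theta(x^i)=\log\!\int e^{-E(x^i,z)}dz-\log Z(\theta)$ delivers the exactness identity.

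For part (2), the same two identities convert each restricted supremum into a restricted $\KL$ infimum, yielding the clean decomposition
\begin{align*}
\mathrm{ELBO}(\theta;\mathcal S_{\mathrm{pos}},\mathcal S_{\mathrm{neg}})
= \frac{1}{N}\sum_{i=1}^N\log p_\theta(x^i)
\;-\; \frac{1}{N}\sum_{i=1}^N \inf_{q\in\mathcal S_{\mathrm{pos}}(x^i)}\KL\!\big(q\,\|\,p_\theta(\cdot\mid x^i)\big)
\;+\; \inf_{\tilde q\in\mathcal S_{\mathrm{neg}}}\KL(\tilde q\,\|\,p_\theta).
\end{align*}
The positive-phase term is subtracted and nonnegative, so it pushes the functional below $\frac1N\sum_i\log p_\theta(x^i)$, with a strict gap whenever $p_\theta(\cdot\mid x^i)\notin\mathcal S_{\mathrm{pos}}(x^i)$ on a positive-frequency set of indices, by strict positivity of $\KL$. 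This part follows purely from monotonicity of $\sup$ under set inclusion and needs no further work.

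The hard part is the negative-phase term $\inf_{\mathcal S_{\mathrm{neg}}}\KL(\cdot\|p_\theta)$: it enters with a $+$ sign, so shrinking $\mathcal S_{\mathrm{neg}}$ \emph{raises} the functional and can push it above the true log-likelihood, breaking a one-sided bound. Thus the stated domination is clean only once this term is forced to zero, i.e.\ when the negative family is the full $\tilde{\mathcal Q}$ (equivalently $p_\theta\in\mathcal S_{\mathrm{neg}}$). This is exactly the regime the particle negative pool realizes, since its Langevin flow targets $p_\theta$ and contracts to it (Theorem~\ref{thm:inner-flow}), whereas an amortized VI negative family leaves a residual $\KL$ that cannot be compared one-sidedly. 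I would therefore carry out part (2) with the negative phase evaluated over $\tilde{\mathcal Q}$ (or under the hypothesis $p_\theta\in\mathcal S_{\mathrm{neg}}$), locate all strictness in the positive-phase $\KL$, and treat the genuinely-restricted-negative case as the single step that does not follow from naive $\sup$-monotonicity and requires this extra care.
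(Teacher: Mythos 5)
Your Part~(1) argument is exactly the paper's: the Gibbs/Donsker--Varadhan identity applied once per conditional (base measure $dz$) and once on the joint (base measure $dx\,dz$), with the suprema attained at $p_\theta(\cdot\mid x^i)$ and $p_\theta(x,z)$, then subtract and average. Nothing to add there.

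For Part~(2) you depart from the paper, and you are right to. The paper's proof is the one-line claim that a supremum over a smaller set cannot increase, applied to both phases; but the negative-phase supremum enters the ELBO with a \emph{minus} sign, so for that term set-inclusion monotonicity points the wrong way: shrinking $\mathcal S_{\mathrm{neg}}$ decreases $\sup_{\tilde q\in\mathcal S_{\mathrm{neg}}}\mathcal B(\tilde q)$ and therefore \emph{increases} the ELBO. Your decomposition
\begin{align*}
\mathrm{ELBO}(\theta;\mathcal S_{\mathrm{pos}},\mathcal S_{\mathrm{neg}})
= \frac{1}{N}\sum_{i=1}^N\log p_\theta(x^i)
\;-\; \frac{1}{N}\sum_{i=1}^N \inf_{q\in\mathcal S_{\mathrm{pos}}(x^i)}\KL\!\big(q\,\|\,p_\theta(\cdot\mid x^i)\big)
\;+\; \inf_{\tilde q\in\mathcal S_{\mathrm{neg}}}\KL(\tilde q\,\|\,p_\theta)
\end{align*}
makes this explicit and immediately yields a counterexample to the theorem as stated: take $\mathcal S_{\mathrm{pos}}(x^i)=\mathcal Q$ for all $i$ and $\mathcal S_{\mathrm{neg}}=\{\tilde q_0\}$ with $\tilde q_0\neq p_\theta$; then $\mathrm{ELBO}(\theta;\mathcal Q,\mathcal S_{\mathrm{neg}})=\frac{1}{N}\sum_i\log p_\theta(x^i)+\KL(\tilde q_0\|p_\theta)$, which \emph{exceeds} $\mathrm{ELBO}(\theta;\mathcal Q,\tilde{\mathcal Q})$. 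In particular the paper's strictness clause for $p_\theta(x,z)\notin\mathcal S_{\mathrm{neg}}$ produces a strict inequality in the opposite direction from the one claimed. Your repaired version---domination, with strictness sourced only from the positive phase, under the additional hypothesis that $p_\theta\in\mathcal S_{\mathrm{neg}}$ (equivalently $\inf_{\tilde q\in\mathcal S_{\mathrm{neg}}}\KL(\tilde q\|p_\theta)=0$, which is what the particle negative pool achieves in the limit by Theorem~\ref{thm:inner-flow})---is the correct fix, and it preserves the comparison against amortized positive-phase VI that the paper actually wants.

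One caveat that applies to your strictness step as much as to the paper's: $p_\theta(\cdot\mid x^i)\notin\mathcal S_{\mathrm{pos}}(x^i)$ does not by itself force $\inf_{q\in\mathcal S_{\mathrm{pos}}(x^i)}\KL(q\,\|\,p_\theta(\cdot\mid x^i))>0$, since the infimum need not be attained (the target may lie in the KL-closure of the family). Strict inequality requires the family to be suitably closed or the infimum attained; this is the assumption hiding behind the paper's appeal to a.e.-uniqueness of the optimizer, and it deserves one explicit sentence in your write-up.
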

\begin{proof} 
The proof applies the Donsker--Varadhan variational identity \citep{donsker1975asymptotic,wainwright2008graphical,cover2006elements}
$\log \int e^{f}\,d\lambda=\sup_{q\ll\lambda}\{\E_q[f]+H(q)\}$ twice:
\begin{enumerate}[label=(\roman*)]
\item With $f_x(z) =$ $-E(x,z;\theta)$ and base measure $dz$ to express
$\log\!\int e^{-E(x^i,z;\theta)}dz$ as $\sup_{q\in\mathcal Q}\mathcal A_i(q)$,
attained at $q^\star(\cdot)=p_\theta(\cdot\mid x^i)$.
\item With $f(x,z) =-E(x,z;\theta)$ and base measure $dx\,dz$ to express
$\log\!\iint e^{-E(x,z;\theta)}dx\,dz$ as $\sup_{\tilde q\in\tilde{\mathcal Q}}\mathcal B(\tilde q)$,
attained at $\tilde q^\star=p_\theta(x,z)$.
\end{enumerate}
Subtracting the second identity from the first and averaging over \(i\) yields Part~(1).  
Part~(2) follows because taking a supremum over a smaller feasible set cannot increase its value.  
Strictness holds since the optimizers in Part~(1) are (a.e.) unique, which follows from the strict convexity of relative entropy \citep{cover2006elements,rockafellar1970convex}.
\end{proof}

\section{Conditional Generation}
\label{sec:conditional}
In the LV-EBM, reconstructing $x$ from $z$ just means drawing from the conditional distribution
\begin{align}
x \sim p_\theta(x|z) \propto \exp(-E(x,z;\theta)).\label{eq:map}
\end{align}
Consider the following ways:
\begin{enumerate}[label=(\roman*)]
\item Deterministic generation (MAP reconstruction): We can find the most likely $x$ according to $x^* = \arg\max_x\, \log p_\theta(x|z)$. To get $x^*$, we can perform gradient descent given an randomly initiated $x_0$:
\begin{align}
x_{t+1} = x_t - \eta \nabla_x E(x_t, z; \theta).\label{eq:deter_recon}
\end{align}
\item Stochastic generation (Langevin dynamics): Based on \eqref{eq:map}, we can find the set $x$ to be the one that minimizes the energy, $x^* = \arg \min_x E(x,z;\theta)$. In practice, we can run overdamped Langevin dynamics in the $x$-coordinates while holding $z$ fixed: 
\begin{align}
x_{t+1} = x_t - \eta \nabla_x E(x_t, z;\theta) + \sqrt{2\eta}\epsilon_t. \label{eq:stoch_recon}
\end{align}
Alternatively, for better mixing in high-dimensional $x$, we may introduce momentum:
\small
\begin{equation}
\begin{aligned}
v_{k+1} &= (1-\gamma)v_k - \eta \nabla_xE(x_k,z;\theta) + \sqrt{2\gamma \eta} \epsilon_k,\\
x_{k+1} &= x_k + v_{k+1}.
\end{aligned}\label{eq:sghmc}
\end{equation}
\normalsize
\end{enumerate}
The former approach gives a sharp deterministic MAP estimation of $x$, while the latter preserves multi-modality/uncertainty. 

\section{Experiments}
\label{sec:experiments}
We aim to evaluate the effectiveness of our method for both \emph{posterior inference} and \emph{generative reconstruction}. 
To this end, we consider three controllable geometric families: \textbf{2D concentric rings}, \textbf{3D concentric spheres}, and \textbf{2D harmonic-modulated rings} (Fourier-based radius modulation). 
These datasets allow us to assess whether the models can accurately approximate the conditional posterior.
We compare against \textbf{VAE}, \textbf{Non-Amortized VI (NAVI)} and \textbf{Hard EM} on multiple metrics. All our experiments were conducted on the server with 48-core 3.00GHz Intel(R) Xeon(R) Gold 6248R CPU and 8 NVIDIA A30 GPUs. The code is made available at \url{https://anonymous.4open.science/r/LV_EBM-5730/}. 

\begin{table}[t]
  \centering
  \scriptsize
  \setlength{\tabcolsep}{3pt} 
  \resizebox{\linewidth}{!}{%
  \begin{tabular}{llcccc}
    \toprule
    && Ours & VAE & NAVI & Hard EM \\
    \midrule
    \multirow{4}{*}{LCR-2D}
      & ELBO   & \textbf{2.5048} & 2.3038 & 2.3039 & 2.3025 \\
      & RMSE   & \textbf{0.1612} & 0.7585 & 0.7651 & 0.7655 \\
      & MMD    & \textbf{0.1549} & 0.5712 & 0.5259 & 0.5680 \\
      & $W_2^2$& \textbf{0.2161} & 1.0821 & 1.0787 & 1.0834 \\
    \midrule
    \multirow{4}{*}{LCS-3D}
      & ELBO   & \textbf{3.4984} & 2.8499 & 2.8505 & 2.8274 \\
      & RMSE   & \textbf{0.1303} & 0.6247 & 0.6189 & 0.6225 \\
      & MMD    & \textbf{0.0129} & 0.5376 & 0.5471 & 0.5474 \\
      & $W_2^2$& \textbf{0.2567} & 1.0837 & 1.0748 & 1.0869 \\
    \midrule
    \multirow{4}{*}{HMR-2D}
      & ELBO   & \textbf{5.2234} & 3.1866 & 3.1813 & 3.1850 \\
      & RMSE   & \textbf{0.0375} & 0.8565 & 1.1890 & 1.1881 \\
      & MMD    & \textbf{0.0012} & 0.3485 & 0.4843 & 0.5620 \\
      & $W_2^2$& \textbf{0.1663} & 1.1998 & 1.6820 & 1.6739 \\
    \bottomrule
  \end{tabular}
  }
  \caption{Results across three scenarios, four methods, and four evaluation metrics.}
  \label{tab:main_res}
\end{table}

\subsection*{Datasets}
We consider three datasets and use fixed train/test splits with shared random seeds across methods.

\begin{enumerate}[leftmargin=1.2em,itemsep=0.2em,topsep=0.2em]
\item \textbf{2D concentric rings (LCR-2D).}
Sample \(R_c\in\mathcal C\), \(\phi\sim\mathrm{Unif}[0,2\pi)\), \(\varepsilon_r\sim\mathcal N(0,\sigma_r^2)\), \(\varepsilon_x\sim\mathcal N(0,\sigma_x^2 I_2)\); set
\(r=R_c+\varepsilon_r\), \(x=r\,[\cos\phi,\sin\phi]^\top+\varepsilon_x\).

\item \textbf{3D concentric spheres (LCS-3D).}
Sample \(R_c\in\mathcal C\); draw a uniform direction on \(\mathbb S^2\) by \(u\sim\mathrm{Unif}[-1,1]\), \(\phi\sim\mathrm{Unif}[0,2\pi)\), \(\theta=\arccos u\).
With \(\varepsilon_r\sim\mathcal N(0,\sigma_r^2)\), \(\varepsilon_x\sim\mathcal N(0,\sigma_x^2 I_3)\): \(r=R_c+\varepsilon_r\), \(x=r\,[\sin\theta\cos\phi,\sin\theta\sin\phi,\cos\theta]^\top+\varepsilon_x\).

\item \textbf{2D harmonic-modulated rings (HMR-2D).}
On top of \(R_c\), define \(r(\phi)=R_c+\sum_{k=1}^{K}\!\big[a_k\cos(k\phi)+b_k\sin(k\phi)\big]+\varepsilon_r\) with \(\phi\sim\mathrm{Unif}[0,2\pi)\), \(\varepsilon_r\sim\mathcal N(0,\sigma_r^2)\).
Let \(\varepsilon_x\sim\mathcal N(0,\sigma_x^2 I_2)\), \(x=r(\phi)\,[\cos\phi,\sin\phi]^\top+\varepsilon_x\).
\end{enumerate}

\subsection{Baselines}
\textbf{VAE.}
Encoder \(q_\phi(z\mid x)\) and decoder \(p_\theta(x\mid z)\) are diagonal Gaussians; we optimize the ELBO with the reparameterization trick.

\textbf{Non-Amortized VI (NAVI).}
Per-sample variational parameters \((\mu_i,\log\sigma_i^2)\) are optimized directly via \(K\) inner gradient-ascent steps; then \(\theta\) is updated to maximize the ELBO. No encoder is used, removing the amortization gap.

\textbf{Hard EM.}
E-step: MAP optimization of \(z\) to maximize \(\log p(x\mid z;\theta)+\log p(z)\).
M-step: update \(\theta\) by maximizing the conditional likelihood.

\subsection{Evaluation Metrics}
We report four metrics (arrows indicate the preferred direction):
\begin{enumerate}[leftmargin=1.2em,itemsep=0pt,topsep=2pt]
\item \textbf{ELBO / NLL (\(\uparrow\)).} We report absolute ELBO for our method, VAE, and NAVI, and absolute NLL for Hard EM.
\item \textbf{RMSE (\(\downarrow\)).}Root mean squared error between the reconstruction \(\hat{x}\) and the ground truth \(x\).
\item \textbf{MMD (RBF) (\(\downarrow\)).} Gaussian-kernel MMD; bandwidth via median heuristic or multi-scale.
\item \textbf{\(W_2^2\) (Sinkhorn) (\(\downarrow\)).} Entropy-regularized OT approximation to squared 2-Wasserstein; common \(\varepsilon\) and iteration budget across settings.
\end{enumerate}

\subsection{Main Results}

Table \ref{tab:main_res} summarizes all results across the three settings and four methods. We observe: 1) \textbf{ELBO / NLL:} Our method is better than or competitive with the strongest baseline on all datasets, consistent with direct, energy-driven particle updates in both the joint and conditional spaces that avoid approximation bias from restricted variational families. 2) \textbf{RMSE and MMD (RBF):} Our method shows a clear advantage in generative reconstruction, achieving \emph{lower} RMSE and MMD, indicating more reliable preservation of and alignment with the ground-truth data structure. 3) \textbf{\(W_2^2\) (Sinkhorn):} Our method yields \emph{lower} distances across all three datasets, suggesting better global alignment of the distributions in both summary statistics and distributional shape.

\begin{figure}[!t]
  \centering
  \includegraphics[width=0.95\linewidth]{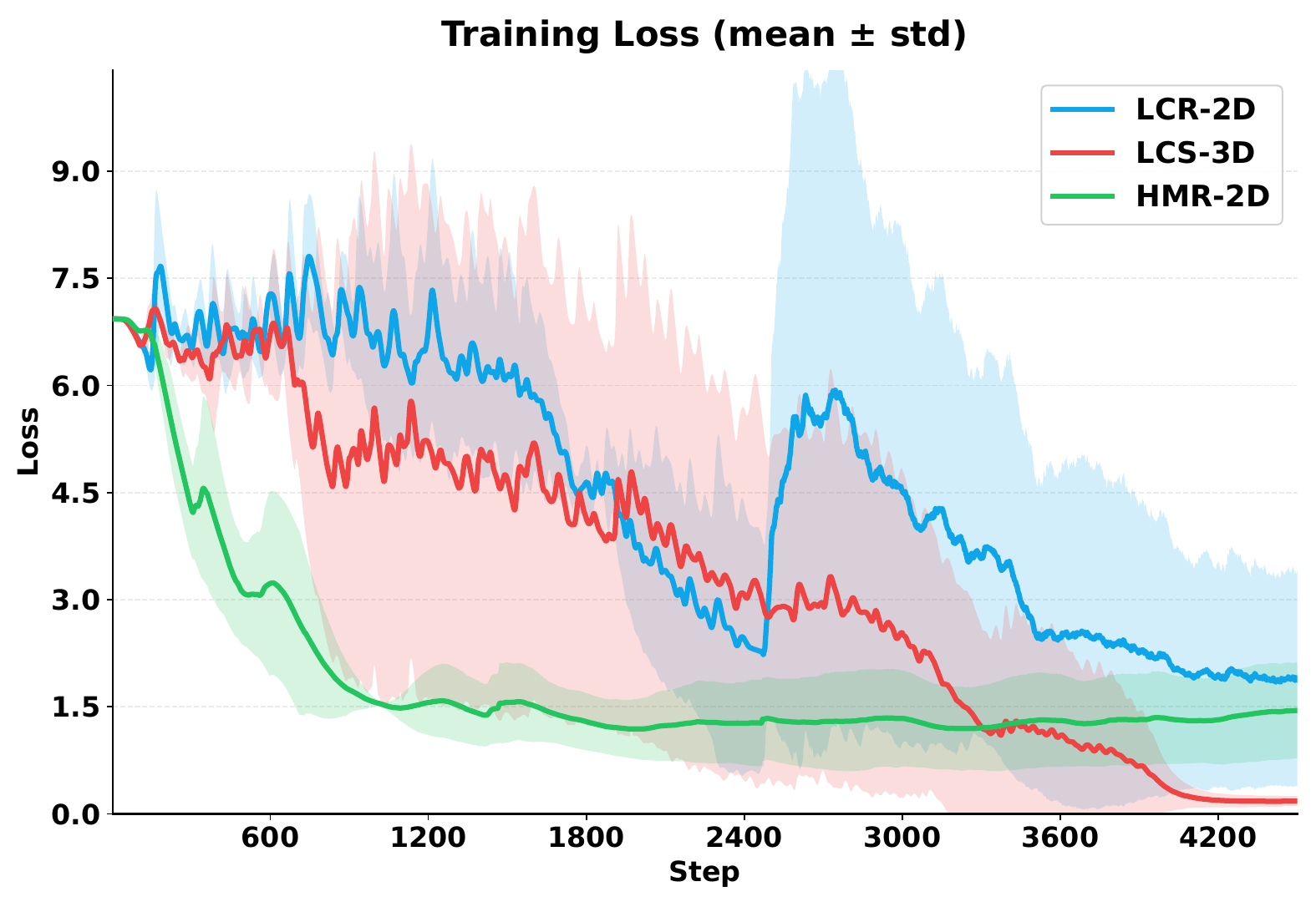}
  \caption{Training loss vs. step (up to 4500) for three scenarios. Solid lines depict the mean over three runs; shaded regions show ±1 standard deviation.}
  \label{fig:loss}
\end{figure}

To illustrate the training dynamics and convergence of our method, we plot the training loss for each setting together with the mean \(\pm\) standard deviation over three independent runs. As shown in Figure \ref{fig:loss}, the objective decreases before plateauing, and variance decreases steadily, indicating stable convergence. This behavior is consistent with theoretical results on entropy decay and Wasserstein contraction for particle Langevin updates under standard assumptions. 

\begin{figure*}[t]
  \centering
  \includegraphics[width=0.57\textwidth]{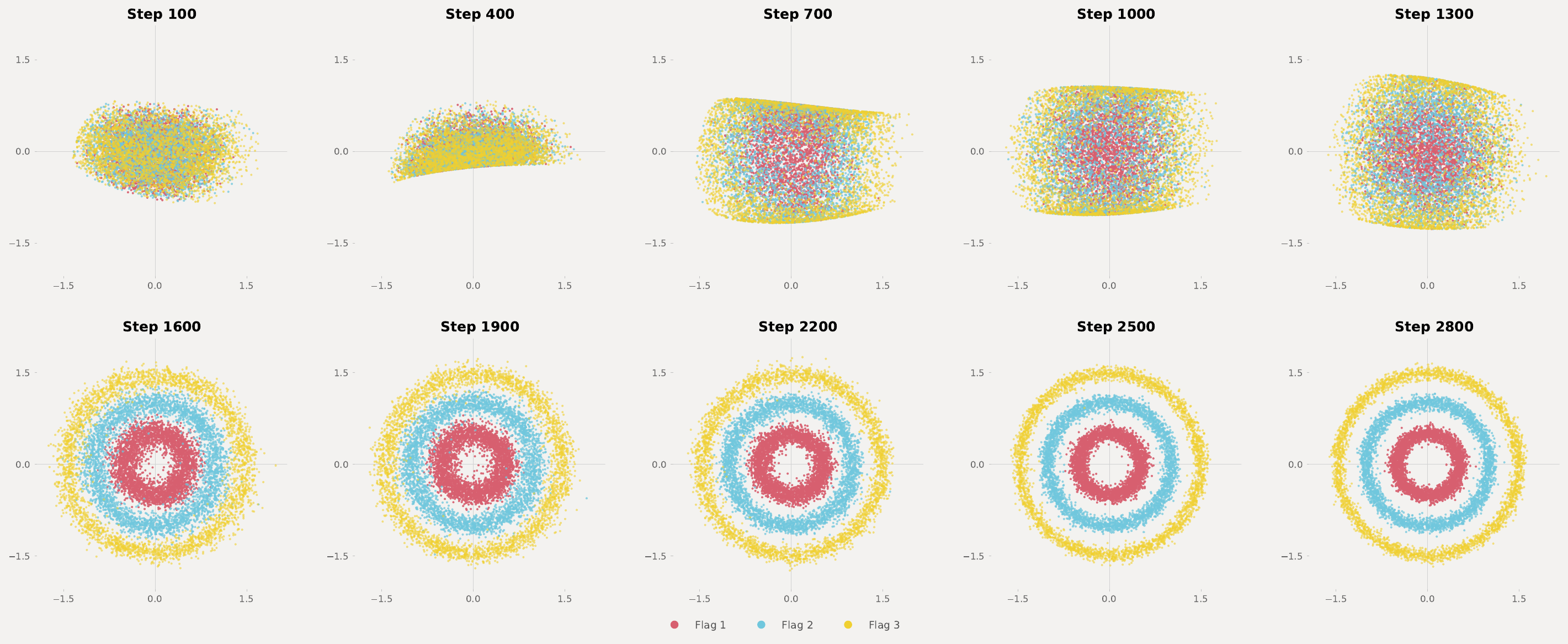}
  \includegraphics[width=0.37\textwidth]{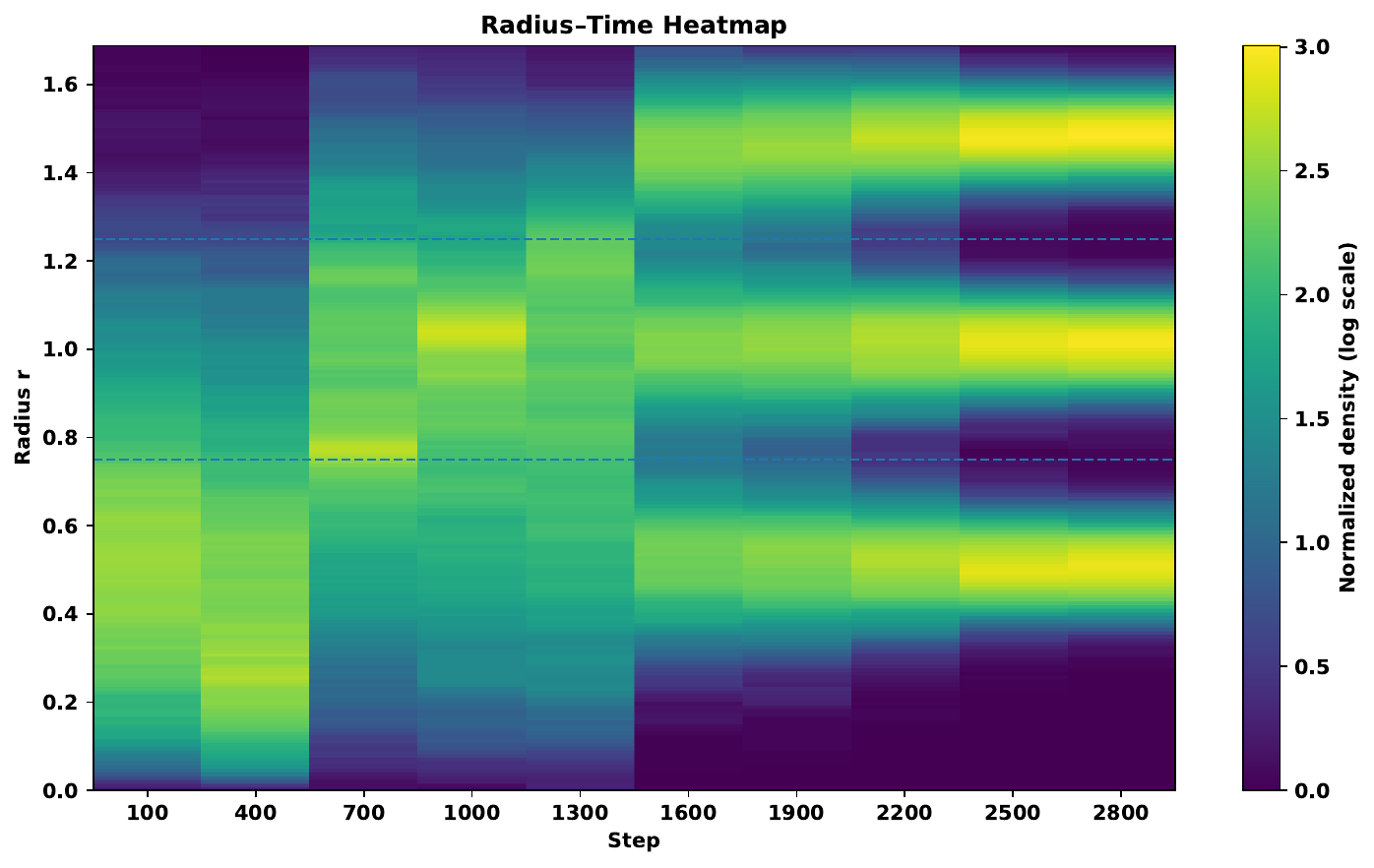}
  \caption{\textbf{Left}: Ten snapshots of LCR-2D reconstructions across training steps; Colors denote ground-truth radial classes. 
  \textbf{Right}: Radius–step heatmap computed from the same reconstructions. The color intensity encodes the per-step normalized radial density.}
  \label{fig:diffusion_process}
\end{figure*}

\textbf{Reconstruction Method.}
For each observation \(x\), we run a short-run Langevin posterior probe under the energy \(E(x,z)\) to produce \(M\) candidates and pick the MAP code \(\hat z=\arg\min_{z} E(x,z)\).
With the energy network frozen, we train a decoder \(g_\theta\) by minimizing \(\lVert g_\theta(\hat z)-x\rVert_2^2\) over the training set. At inference, we repeat the probe to obtain \(\hat z\) and output the reconstruction \(x_{\text{rec}}=g_\theta(\hat z)\). We visualize the training snapshots for LCR-2D as shown in Figure \ref{fig:diffusion_process}. Points are colored by three radius-based classes. Test particles evolve from a scattered layout to tight alignment with the target rings; inter-class boundaries become clear and within-class density becomes more uniform, showing good mixing and mode preservation.

Figure \ref{fig:recon_compare} demonstrates the alignment between reconstructed and ground-truth distributions under three settings. The reconstructions closely track the ground truth in ring or shell thickness.

\begin{figure}[!htbp]
  \centering
  \includegraphics[width=0.92\linewidth]{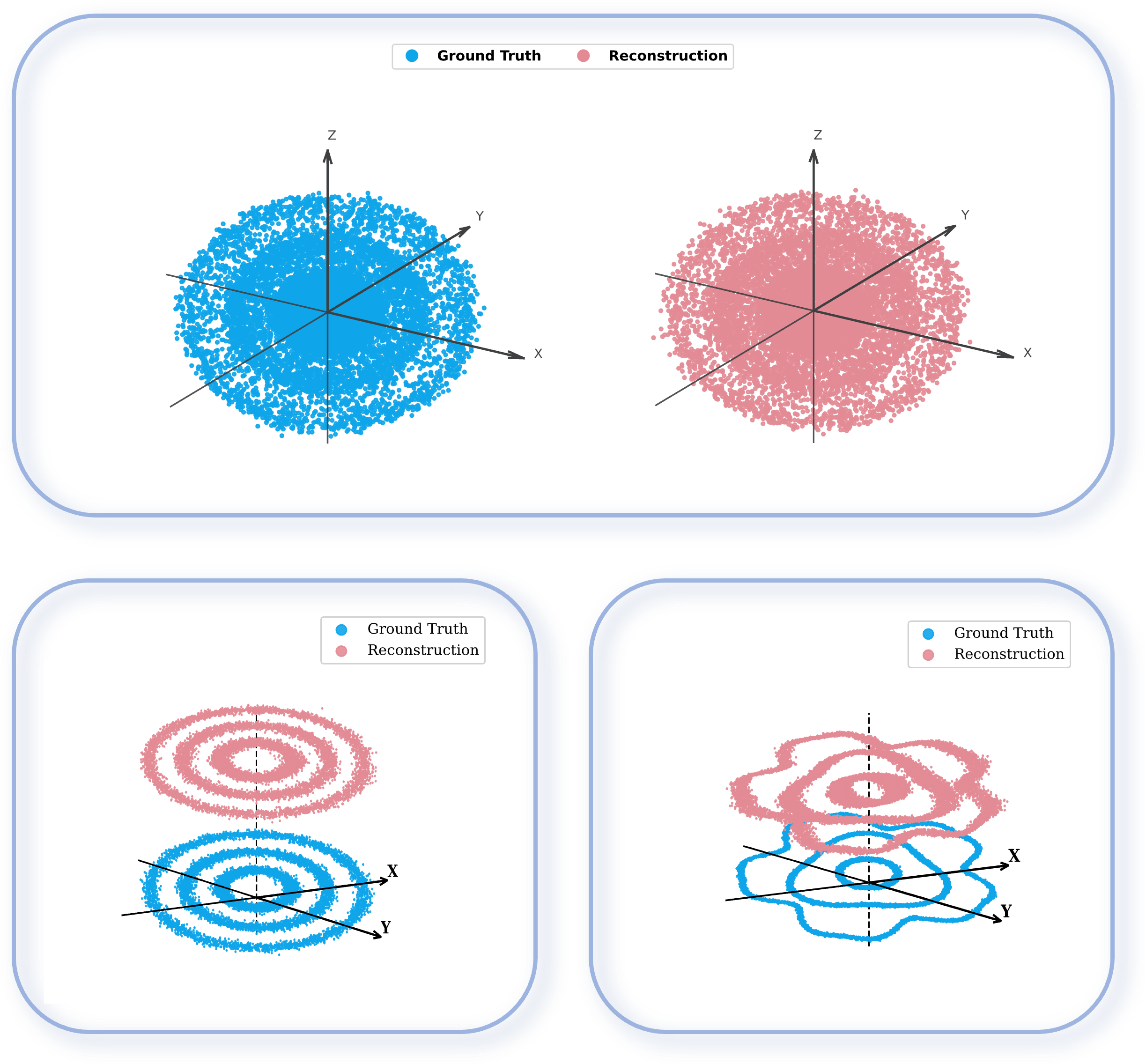}
  \caption{Reconstruction results: LCS-3D (top), LCR-2D (bottom-left), and HMR-2D (bottom-right).}
  \label{fig:recon_compare}
\end{figure}

\subsection{Discussion and Summary}
Across the three tasks and four metrics, our method achieves stable gains in both \emph{probabilistic consistency} (ELBO/NLL) and \emph{geometric/perceptual consistency} (RMSE, MMD (RBF), \(W_2^2\) (Sinkhorn)). 
We attribute this to two key factors: (i) coupled particle--Langevin updates for the joint \((x,z)\) and the conditional \(z\!\mid\!x\), which fit the target Gibbs distribution more directly and reduce bias from restricted variational families or decoder assumptions; and (ii) MAP/Langevin reconstruction along the energy landscape, which preserves multi-modal geometric structure and frequency-related patterns. 
These observations are consistent with the theoretical motivation and guarantees discussed earlier (particle implementation, Fokker--Planck evolution, and KL/\(W_2\) contraction), and they suggest that the approach scales to more complex data.

\section{Conclusion}

We presented a principled framework for latent-variable energy-based models (LV-EBMs) via coupled Wasserstein gradient flows, casting MLE as a saddle problem over latent and joint distributions. This yields a compact, particle-based learning algorithm that alternates conditional latent updates with joint negative sampling, requiring no auxiliary posterior, discriminator, or decoder, and whose induced Langevin dynamics enjoy existence, stability, and exponential convergence guarantees under standard smoothness and dissipativity—measured in KL and Wasserstein metrics. Empirically, across controlled physical systems, the method is consistently competitive or superior to VAE, non-amortized VI, and hard-EM baselines, reflecting stronger preservation of multimodal geometric structure and tighter likelihood objectives. Looking ahead, we see opportunities to scale to high-dimensional real data, enrich energy parameterizations, and interface with diffusion/flow-based models, further unifying theory and practice for LV-EBMs.




\bibliographystyle{plain} 
\bibliography{refs}

\onecolumn
\aistatstitle{Particle Dynamics for Latent-Variable Energy-Based Models}

\section{Theorem 1 with a Weaker Assumption}

In this section we state a weaker version of Theorem~1 that applies to the conditional Fokker--Planck flow in $z$ with $x$ fixed. 
Relative to the main text, we replace the stronger tail/functional-inequality assumptions (e.g., LSI) by simple coercivity and dissipativity of $E(x,\cdot)$; this is sufficient to ensure normalizability of $p^*(z\mid x)\propto e^{-E(x,z)}$ and well-posedness of the dynamics. 
The price is that we obtain weak convergence without a rate (no exponential contraction). 
This is exactly what is needed to justify the conditional Langevin inner loop used by our algorithm; whenever stronger curvature/tail controls hold, the main-text result with rates applies.

\begin{theorem}[Weak convergence without LSI]
	Fix $x$ and consider the conditional dynamics in $z\in\mathbb{R}^\ell$. Let $E(x,z)$ satisfy, for a.e.\ $x$:
	\begin{enumerate}
		\item \textbf{Coercive growth:} $E(x,z)\to\infty$ as $\lvert z\rvert\to\infty$.
		\item \textbf{Smoothness:} $E(x,\cdot)\in C^{2}(\mathbb{R}^{\ell})$ and $\nabla_{z}E(x,\cdot)$ is locally Lipschitz.
		\item \textbf{Dissipativity:} $\langle z,\nabla_{z}E(x,z)\rangle \ge m\lvert z\rvert^{2}-b$ for some $m>0$, $b\ge 0$.
		\item \textbf{Well-posedness (FP/SDE):} The conditional FP equation below (equivalently, the Langevin SDE) is non-explosive and has a unique weak solution for any $q_0^x\in\mathcal P_2(\mathbb{R}^{\ell})$.
	\end{enumerate}
	Let $q_t^x$ solve the conditional Fokker--Planck equation in $z$:
	\[
	\partial_t q_t^x(z)
	= \nabla_{z}\!\cdot\!\big(q_t^x(z)\,\nabla_{z} E(x,z)\big) + \Delta_{z} q_t^x(z),
	\qquad q_{t=0}^x = q_0^x \in \mathcal P_2(\mathbb{R}^{\ell}).
	\]
	Let $p^*(z\mid x)\propto e^{-E(x,z)}$ be the conditional Gibbs measure (well-defined by coercivity).
	Then $q_t^x \rightharpoonup p^*(\cdot\mid x)$ weakly as $t\to\infty$.
\end{theorem}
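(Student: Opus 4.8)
The plan is to treat the relative entropy $H(t):=\KL\!\big(q_t^x\,\|\,p^*(\cdot\mid x)\big)$ as a Lyapunov functional, exactly as in the proof sketch of Theorem~1, but to replace the LSI-driven Grönwall step by a compactness-and-invariance argument, since here no functional inequality is assumed. First I would note that the coercivity hypothesis makes $p^*(\cdot\mid x)\propto e^{-E(x,\cdot)}$ a bona fide probability measure and that it is the unique stationary, reversible measure of the conditional dynamics (non-degenerate diffusion with a gradient drift). Using that the Laplacian part instantaneously regularizes $q_t^x$ so that $H(t)<\infty$ for $t>0$, the entropy-dissipation identity reads
\[
\frac{d}{dt}H(t)\;=\;-\,\mathcal I\!\left(q_t^x\,\middle\|\,p^*(\cdot\mid x)\right)\;\le\;0,
\]
where $\mathcal I$ is the relative Fisher information. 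Hence $H(t)$ decreases to some limit $H_\infty\ge0$ and, upon integrating, $\int_0^\infty \mathcal I\big(q_t^x\,\|\,p^*(\cdot\mid x)\big)\,dt\le H(0)<\infty$.

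Second I would establish tightness of the trajectory from the dissipativity hypothesis. Testing the Fokker--Planck equation against $|z|^2$ (equivalently, an It\^o estimate on the associated Langevin SDE, whose diffusion contributes the dimensional term $2\ell$) gives
\[
\frac{d}{dt}\,\E_{q_t^x}\!\big[|z|^2\big]\;\le\;-2m\,\E_{q_t^x}\!\big[|z|^2\big]+2b+2\ell,
\]
so Grönwall yields a uniform-in-time second-moment bound $\sup_{t\ge0}\E_{q_t^x}[|z|^2]<\infty$. By Prokhorov's theorem this makes $\{q_t^x\}_{t\ge0}$ relatively compact in the weak topology and rules out escape of mass to infinity.

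Third I would identify every subsequential limit with the equilibrium. Since $\int_0^\infty\mathcal I\,dt<\infty$, there is a sequence $t_n\to\infty$ along which $\mathcal I\big(q_{t_n}^x\,\|\,p^*(\cdot\mid x)\big)\to0$; by tightness we pass to a further subsequence with $q_{t_n}^x\rightharpoonup q_\infty$. Weak lower semicontinuity of the relative Fisher information forces $\mathcal I\big(q_\infty\,\|\,p^*(\cdot\mid x)\big)=0$, i.e.\ $\nabla_z\log\!\big(q_\infty/p^*(\cdot\mid x)\big)=0$ $q_\infty$-almost everywhere; since $p^*(\cdot\mid x)>0$ on all of the connected set $\R^\ell$, this pins down $q_\infty=p^*(\cdot\mid x)$.

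The hard part — and the reason the conclusion is only weak convergence \emph{without} a rate — is upgrading this subsequential statement to convergence of the whole trajectory. Lower semicontinuity of relative entropy yields only $H_\infty\ge0$, not $H_\infty=0$, and with no functional inequality there is no contraction to propagate along time. I would close the gap by a LaSalle-type invariance argument: using the well-posedness and stability hypothesis, the $\omega$-limit set of $\{q_t^x\}$ is nonempty, compact, and invariant under the Fokker--Planck semigroup, while $H$ is constant on it; invariance then forces $\mathcal I$ to vanish identically along $\omega$-limit trajectories, so every element of the $\omega$-limit set is stationary. As $p^*(\cdot\mid x)$ is the unique stationary measure (ergodicity of the non-degenerate reversible diffusion on the connected space $\R^\ell$), the $\omega$-limit set is the singleton $\{p^*(\cdot\mid x)\}$, and relative compactness of the trajectory upgrades this to $q_t^x\rightharpoonup p^*(\cdot\mid x)$. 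The technical heart is verifying the weak continuity and invariance of the semigroup needed for LaSalle; equivalently, one may invoke the spectral theorem for the self-adjoint generator $\Delta_z-\nabla_z E\cdot\nabla_z$ on $L^2\big(p^*(\cdot\mid x)\big)$, whose kernel is the constants by ergodicity, to obtain strong convergence of $e^{tL}$ to the equilibrium projection even in the absence of a spectral gap.
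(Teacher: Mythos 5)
Your proposal follows the paper's own proof step for step through its first five stages: the entropy-dissipation identity $\frac{d}{dt}\KL(q_t^x\,\|\,p^*) = -\mathcal I(q_t^x\,\|\,p^*)$, the resulting time-integrability of the relative Fisher information, the second-moment/Gr\"onwall bound from dissipativity giving tightness, and the extraction of a sequence of times along which the Fisher information vanishes so that lower semicontinuity identifies the weak limit as $p^*$. Where you genuinely depart from the paper is the final step, and your instinct there is correct: the paper simply asserts that ``every convergent subsequence has the same limit $p^*$,'' but its argument establishes this only for the special subsequence along which $\mathcal I \to 0$. Indeed, $\int_0^\infty \mathcal I(q_s^x\,\|\,p^*)\,ds < \infty$ does not force $\mathcal I(q_{t_n}^x\,\|\,p^*)\to 0$ along an \emph{arbitrary} sequence of times $t_n\to\infty$ (the Fisher information may spike on a set of times of finite measure), and lower semicontinuity of $\KL$ under weak convergence only yields $H_\infty \ge 0$, not $H_\infty = 0$, so the subsequential statement does not self-upgrade to convergence of the whole trajectory; the paper's ordering of the two subsequence extractions (first weak convergence, then vanishing Fisher information) compounds this, since the second extraction is not actually available. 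Your repair---a LaSalle-type invariance argument showing the $\omega$-limit set is invariant under the Fokker--Planck semigroup, that the entropy is constant and hence the Fisher information vanishes along $\omega$-limit trajectories, and that uniqueness of the stationary measure then collapses the $\omega$-limit set to $\{p^*\}$---or alternatively the spectral argument for the self-adjoint generator on $L^2(p^*)$ (strong semigroup convergence to the equilibrium projection even without a spectral gap, extended to general initial data by total-variation contraction of the Markov semigroup)---is exactly the kind of additional ingredient needed. What your route buys is rigor: it actually proves the stated theorem, where the paper's proof ends with an unjustified assertion. What it costs is the verification you yourself flag but do not carry out: weak continuity and invariance of the semigroup for the LaSalle argument, or the $L^2(p^*)$-density approximation step for the spectral route.
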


\begin{proof}
	\textbf{1. Entropy functional.}
	Define the relative entropy $\mathcal H(t):=D_{\mathrm{KL}}(q_t^x\|p^*(\cdot\mid x))$.
	
	\textbf{2. Entropy dissipation.}
	Using $\partial_t q_t^x=\nabla_z\!\cdot(q_t^x\nabla_z E)+\Delta_z q_t^x$ and integrating by parts,
	\begin{align}
		\frac{d\mathcal H}{dt}
		&= \int \big(\partial_t q_t^x\big)\!\left(\log\frac{q_t^x}{p^*}+1\right)\,dz \notag\\
		&= \int \big[\nabla_z\!\cdot(q_t^x\nabla_z E)+\Delta_z q_t^x\big]\,
		\log\frac{q_t^x}{p^*}\,dz \notag\\
		&= -\!\int q_t^x\,\nabla_z E\cdot\nabla_z \log\frac{q_t^x}{p^*}\,dz
		-\!\int \nabla_z q_t^x\cdot\nabla_z \log\frac{q_t^x}{p^*}\,dz \notag\\
		&= -\!\int \big(q_t^x\nabla_z E+\nabla_z q_t^x\big)\cdot\nabla_z \log\frac{q_t^x}{p^*}\,dz \notag\\
		&= -\!\int q_t^x\big(\nabla_z E+\nabla_z \log q_t^x\big)\cdot\nabla_z \log\frac{q_t^x}{p^*}\,dz \notag\\
		&= -\!\int q_t^x\big(-\nabla_z \log p^*+\nabla_z \log q_t^x\big)\cdot\nabla_z \log\frac{q_t^x}{p^*}\,dz \notag\\
		&= -\int q_t^x \left\lvert \nabla_z \log\frac{q_t^x}{p^*}\right\rvert^2 dz
		= - I(q_t^x\mid p^*), \label{eq:entropy-diss}
	\end{align}
	where $I(q\mid p):=\int q\,\lvert \nabla \log(q/p)\rvert^2 dz$ is the relative Fisher information. Hence $\mathcal H$ is nonincreasing.
	
	\textbf{3. Integrability of Fisher information.}
	From \eqref{eq:entropy-diss},
	\[
	\mathcal H(t) = \mathcal H(0)-\int_{0}^{t} I(q_s^x\mid p^*)\,ds,
	\quad\Rightarrow\quad
	\int_{0}^{\infty} I(q_s^x\mid p^*)\,ds \le \mathcal H(0)<\infty.
	\]
	
	\textbf{4. Second moment and tightness.}
	Let $M_2(t):=\int \lvert z\rvert^2 q_t^x(z)\,dz$. Then
	\begin{align*}
		\frac{dM_2}{dt}
		&= \int \lvert z\rvert^2\,\partial_t q_t^x\,dz
		= \int \lvert z\rvert^2\big[\nabla_z\!\cdot(q_t^x\nabla_z E)+\Delta_z q_t^x\big]\,dz \\
		&= -\int \nabla_z(\lvert z\rvert^2)\cdot (q_t^x\nabla_z E)\,dz \;+\; \int q_t^x\,\Delta_z(\lvert z\rvert^2)\,dz \\
		&= -2\int q_t^x \langle z,\nabla_z E\rangle dz \;+\; 2\ell,
	\end{align*}
	since $\nabla_z(\lvert z\rvert^2)=2z$ and $\Delta_z(\lvert z\rvert^2)=2\ell$.
	By dissipativity,
	\[
	\frac{dM_2}{dt}\;\le\; -2m\,M_2(t) + 2b + 2\ell,
	\]
	so by Grönwall,
	\[
	M_2(t)\;\le\; e^{-2mt}M_2(0) + \frac{b+\ell}{m}\big(1-e^{-2mt}\big)
	\;\le\; M_2(0)+\frac{b+\ell}{m},
	\]
	which yields $\sup_{t\ge 0}M_2(t)<\infty$ and tightness of $\{q_t^x\}_{t\ge 0}$.
	
	\textbf{5. Subsequence limits and identification.}
	Tightness gives $t_n\to\infty$ with $q_{t_n}^x \rightharpoonup q_\infty$.
	Since $\int_0^\infty I(q_s^x\mid p^*)\,ds<\infty$, there exists a (relabeled) subsequence with $I(q_{t_n}^x\mid p^*)\to 0$.
	Lower semicontinuity of entropy and Fisher information under weak convergence implies
	\[
	D_{\mathrm{KL}}(q_\infty\|p^*) \le \liminf_{n} D_{\mathrm{KL}}(q_{t_n}^x\|p^*),\qquad
	I(q_\infty\mid p^*) \le \liminf_{n} I(q_{t_n}^x\mid p^*)=0,
	\]
	hence $I(q_\infty\mid p^*)=0$, which forces $q_\infty=p^*$.
	
	\textbf{6. Conclusion.}
	Every convergent subsequence has the same limit $p^*$, so $q_t^x\rightharpoonup p^*$.
\end{proof}

\paragraph{Remark.}
Compared with the stronger LSI-based statement (which yields exponential rates), the assumptions above only guarantee weak convergence without a rate, but hold under much milder tail and dissipativity conditions.

\section{Experiments on Real-World Data}

\subsection*{UCI POWER}
\paragraph{Origin \& semantics.}
The POWER dataset records household electrical power consumption over 47 months at one-minute resolution. Although it is a time series, prior density-estimation work treats each record as an i.i.d.\ sample from the marginal distribution. The commonly used preprocessed version converts time-of-day to minutes, adds small uniform jitter to avoid duplicate values, discards the date field, and removes ``global reactive power'' due to many exact zeros. The resulting dimensionality is $D=6$.

\subsection*{UCI MINIBOONE}
\paragraph{Origin \& semantics.}
From the MiniBooNE neutrino oscillation experiment, originally a classification task. The preprocessed density-estimation version uses only the positive class (electron neutrinos), removes 11 obvious outliers with $-1000$ in every column, and drops seven features with extreme value counts. The resulting dimensionality is $D=43$.

\begin{table}[t]
	\centering
	\caption{Reference statistics for the MAF-preprocessed POWER and MINIBOONE datasets (as distributed), plus our experimental subsampling protocol.}
	\label{tab:uci_ref_stats}
	\begin{tabular}{lrrrrl}
		\toprule
		\textbf{Dataset} & \textbf{Dim.} & \textbf{Train} & \textbf{Valid} & \textbf{Test} & \textbf{Provenance} \\
		\midrule
		POWER      & 6  & 1,659,917 & 184,435 & 204,928 & MAF bundle\footnotemark[4] \\
		MINIBOONE  & 43 & 29,556    & 3,284   & 3,648   & MAF bundle\footnotemark[4] \\
		\midrule
		\multicolumn{6}{l}{\textit{Our protocol (for experiments): train 20k, test 2k (fixed seed).}} \\
		\bottomrule
	\end{tabular}
\end{table}

\paragraph{Provenance of the preprocessed data.}
We use the exact public bundle released for the Masked Autoregressive Flow (MAF) experiments (Zenodo record \#1161203), which standardizes the community setup across POWER, GAS, and MINIBOONE.\footnote{Zenodo record \#1161203: \url{https://zenodo.org/record/1161203}}

\subsection*{How we use the datasets (training/evaluation protocol)}

For comparability across methods and compute budgets, we randomly sample (with a fixed seed) $20{,}000$ examples from the train pool for training and $2{,}000$ examples from the test split for evaluation.

\subsection{Results on UCI Data}

We adopt the same four evaluation metrics as those used in the main text for experiments. The experimental results are presented in Table~\ref{tab:uci_res}. The results show that our method and the baseline models are trained on the same training set and evaluated on the same test set. Across all four evaluation metrics, our method consistently outperforms the baselines.

\begin{table}[H]
	\centering
	\begin{tabular}{llcccc}
		\toprule
		&& Ours & VAE & NAVI & Hard EM \\
		\midrule
		\multirow{4}{*}{UCI-Power}
		& ELBO   & \textbf{8.2616} & 7.5651 & 7.5971 & 3.8666 \\
		& RMSE   & \textbf{0.3368} & 0.3705 & 0.3630 & 0.6671 \\
		& MMD    & \textbf{0.0957} & 0.1159 & 0.1133 & 0.2231 \\
		& $W_2^2$& \textbf{0.2331} & 0.2855 & 0.2854 & 0.2570 \\
		\midrule
		\multirow{4}{*}{UCI-Miniboone}
		& ELBO   & \textbf{35.7191} & 32.3384 & 31.5001 & 31.9774 \\
		& RMSE   & \textbf{0.2953} & 0.3410 & 0.3273 & 0.3224 \\
		& MMD    & \textbf{0.0542} & 0.0751 & 0.0766 &  0.1032\\
		& $W_2^2$& \textbf{0.0012} & 0.0014 & 0.0024 & 0.0021 \\
		\bottomrule
	\end{tabular}
	\caption{Results on UCI POWER and MINIBOONE using four evaluation metrics.}
	\label{tab:uci_res}
\end{table}

\end{document}